\newtheorem{assumption}{Assumption}
\newtheorem{theorem}{Theorem}
\renewcommand{\algorithmicrequire}{\textbf{Input:}}
\begin{document}
\title{Multi-Worker Selection based Distributed Swarm Learning for Edge IoT with Non-i.i.d. Data
\thanks{This work was supported in part by the National Science Foundation grants
\#2128596, \#2146497, \#2231209, \#2244219, \#2315596, \#2343619, \#2349878, \#2416872, and \#2413622.}
}
\author{\IEEEauthorblockN{Zhuoyu Yao$^\dagger$, \; Yue Wang$^\dagger$, \; Songyang Zhang$^*$, \; Yingshu Li$^\dagger$, \; Zhipeng Cai$^\dagger$, \; Zhi Tian$^\star$}

\IEEEauthorblockA{$^\dagger$Department of Computer Science, Georgia State University, Atlanta, GA, USA\\$^*$Department of Electrical and Computer Engineering, University of Louisiana at Lafayette, LA, USA\\$^\star$Department of Electrical and Computer Engineering, George Mason University, Fairfax, VA, USA}}

\maketitle

\begin{abstract}
Recent advances in distributed swarm learning (DSL) 
offer a promising paradigm 
for 
edge Internet of Things. 
Such advancements 
enhance 
data privacy, communication efficiency, energy saving, and 
model scalability. However, the presence of non-independent and identically distributed (non-i.i.d.) data pose a significant challenge for multi-access edge computing, 
degrading learning performance and diverging training behavior of vanilla DSL. 
Further, there still lacks theoretical guidance on how 
data heterogeneity affects model training accuracy, which requires 
thorough investigation. 
To fill the gap, this paper first study the data heterogeneity by measuring the impact of non-i.i.d. datasets under the DSL framework. This then motivates a new multi-worker selection design for DSL, termed M-DSL algorithm, which works effectively with distributed heterogeneous data. 
A  
new non-i.i.d. degree metric is introduced and defined in this work to formulate the statistical difference among 
local datasets, which builds a connection between the measure of data heterogeneity and the evaluation of DSL  performance. 
In this way, our M-DSL guides effective selection of multiple works who make prominent contributions for global model updates. We also provide theoretical analysis on the convergence behavior of our M-DSL, 
followed by extensive experiments on 
different heterogeneous datasets and non-i.i.d. data settings. Numerical 
results verify 
performance improvement and network intelligence enhancement provided by our M-DSL beyond the benchmarks. 

\end{abstract}
\begin{IEEEkeywords}
Internet of Things, non-i.i.d. data, distributed swarm learning, network intelligence, multi-worker selection.
\end{IEEEkeywords}

\section{Introduction}

With the tremendous technical breakthrough in 6G networks, edge computing emerges as a remarkable alternative to the cloud-based architectures for more collaborative, low-latency, and reliable solutions\cite{fan2025enhancing}. 
Distributed learning systems, e.g. via federated learning (FL), perform as a dedicated workhorse to leverage the local data and computational resources over various edge Internet of Things (IoTs) devices, 
as advantages  in distributed data utilization, collaborative learning, privacy protection, computation and communication efficiency \cite{ma2022state}. However, complex environments in edge IoTs lead to data heterogeneity as a widely acknowledged issue of 
distributed data collected in edge networks~\cite{fan2023efficient}. Non-independent and identically distributed (non-i.i.d.) datasets
 hinder distributed learning, leading to performance degradation, poor generalization, and inefficient cooperation~\cite{fan2024ganfed}.

This paper focuses on the challenges of data heterogeneity  in multi-access edge computing and addresses 
two key questions: \textit{how to effectively quantify  data heterogeneity?} and \textit{how to measure and mitigate 
its impact  on the performance of distributed learning?} 
%
This paper focuses on the widely used definition of non-i.i.d. data in terms of 
label distribution skew, which represents different joint probability of data and labels from distributed local datasets~\cite{kairouz2021advances}. 
Moreover, 
existing approaches for quantification of data heterogeneity often rely on artificially generated non-i.i.d. datasets using sampling methods such as the Dirichlet distribution~\cite{hsu2019measuring}, where a concentration parameter $\alpha$ controls the degree of heterogeneity. 
However, such a synthetic metric fails to capture the true statistical heterogeneity of real edge IoT datasets. 
Therefore, existing works on the impact of non-i.i.d. datasets on distributed learning still lack  effective quantification metric and theoretical analysis, which hinders their deployment in edge IoT of 6G networks.

Another challenge in multi-access distributed learning at the edge is the optimization of network intelligence~\cite{ma2022state}. That is, IoT deployments depend on careful network planning strategies to manage complex and costly device interactions. However, data heterogeneity further amplified  due to the diversity of edge devices creates barriers to collaborative network intelligence for multi-access edge computing. Existing works on FL with non-i.i.d. data have improved model optimization and aggregation via extracting common patterns from heterogeneous datasets~\cite{li2020federated}. However, these stochastic gradient descent (SGD)-based approaches struggle to correct the biased model updates caused by data heterogeneity, often leading to a failure of global model convergence~\cite{wang2024distributed,fan2023cb}. Hence,  deployment of multi-access distributed learning in edge networks still requires advanced network optimization  to ensure robustness and scalability in massive IoT devices.


Consider the data heterogeneity and network optimization issues at the edge, distributed swarm learning (DSL) is proposed to alleviate performance degradation caused by non-i.i.d. data and to mitigate the biased model aggregation\cite{wang2024distributed, fan2023cb}, by integrating particle swarm optimization with gradient descent. 
Despite these advantages, the vanilla DSL framework still has limitations. While the existing work of  DSL considers the non-i.i.d. issues, it has not analyzed how such datasets affect the collaborative learning process.  Moreover, its single best-worker selection mechanism is ineffective, as the collaboration gain of multi-user systems is not fully utilized. 
Facing 
these challenges, this paper proposes a multi-worker selection based 
DSL tailored for 
data heterogeneity scenarios. Our main contributions are summarized as follows: 
\begin{enumerate}
\item For efficient and robust distributed swarm learning with heterogeneous data, we introduce a new cooperation framework for DSL termed multi-worker selection based distributed swarm learning (M-DSL). To the best of our knowledge, we are the first to simultaneously consider the data heterogeneity and learning performance in evaluating individual contribution 
during worker selection, 
leading to a reliable M-DSL 
for edge IoT networks. 

\item We define 
a novel normalized non-i.i.d. degree metric to measure the data label distribution skew. 
The efficacy of the proposed metric is validated by observing the same trend of non-i.i.d. degree and the performance degeneration of distributed learning with non-i.i.d. datasets.

\item 
Based on the non-i.i.d. degree metric, we further develop a worker selection strategy which not only considers data heterogeneity but also enhances the collaborative training performance in global model aggregation. 
We also provide theoretical analysis on the convergence behavior guarantees of our proposed M-DSL solutions. 
\end{enumerate}

\section{Non-i.i.d. Formulation}\label{sec:noniid}
To 
analyze  the inherent characteristics of heterogeneous datasets and   evaluate their effect on performance degeneration in edge IoT networks, we  aim to develop an effective metric to quantify the non-i.i.d. impact on distributed learning. 
To this end, in this section, we  propose a general metric, termed the non-i.i.d. degree, to measure the  label distribution skew. 
Label distribution skew, also known as prior probability shift, i.e., the marginal distributions may vary across workers ${\Pr _{{D_i}}}\left( y \right) \ne {\Pr _{{D_j}}}\left( y \right)$ even if the same conditional distribution ${\Pr _{{D_i}}}\left( {y\left| x \right.} \right) = {\Pr _{{D_j}}}\left( {y\left| x \right.} \right)$~\cite{kairouz2021advances}.  
High skew  undermines the effective collaboration among workers in distributed learning, as various datasets may contain totally different labels. Inspired by the study on the non-overlapping distributions~\cite{lv2025wasserstein}, we develop a normalized metric based on the Wasserstein distance (WD) 
to evaluate the difference of the label distributions. 

\begin{figure}[tb]
    \setlength{\belowcaptionskip}{-0.5cm} 
  \centering
  \includegraphics[scale=0.18]{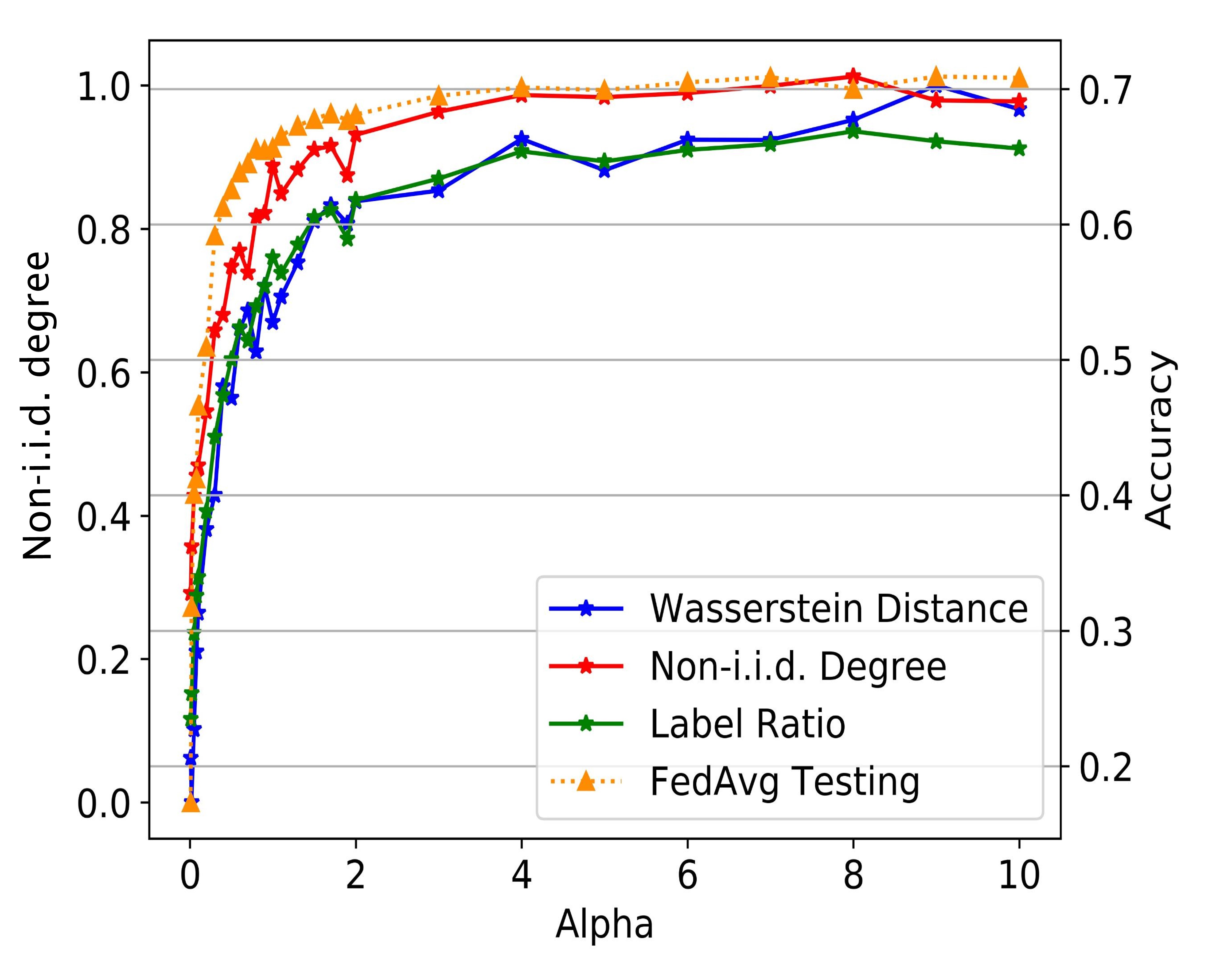}
    \caption{Experiments on data heterogeneity quantification. 
    We visualize the accuracy changes of FedAvg-based FL on heterogeneous data with different metrics: non-i.i.d. degree $1-\eta$, WD $1-W$, and label ratio $1-{{\left| {{{\cal L}_i}} \right|}}/{{\left| {{{\cal L}_g}} \right|}}$. 
    The trend of non-i.i.d. degree matches well with that of distributed learning accuracy as $\alpha$ varies, while obvious gaps are observed for the other two metrics i.e., WD and label ratio. }
    \label{fig:non-iid-degree}
\end{figure}

\setlength{\columnsep}{0.2in}

WD, first introduced to measure the model divergence in~\cite{zhao2018federated}, represents the minimum cost required to  transport one distribution into another. 
Given its capability to capture the geometric relationship and identify non-overlapping distributions, we are the first to apply WD for evaluating label distribution skew.  
Consider a learning based $L$-class classification task with distributed datasets at $C$ workers $\left\{ {{D_i}\left| {i = 1, \ldots ,C} \right.} \right\}$ and a global dataset ${{D_g}}$. 
For each dataset, its sample-label pairs  belong to a sample space $\mathcal{X}$ and a label space $\mathcal{L}$, as $D\in \{\mathcal{X}; \mathcal{L}\},\mathcal{L} = \{l_1, \ldots ,l_L\}$. The  function $f: \mathcal{X} \to \mathcal{L}$ represents the ground-truth mapping from the overall samples to their labels. 
The individual function $f_i$ describes the conditional distribution of labels $\mathcal{L}_i$ based on the samples $\mathcal{X}_i$ corresponding to the $i$-th dataset. 
However, data heterogeneity leads $f_i$ as a biased approximation to the ground-truth $f$. 
Considering that $D_g$ depicts the statistical characteristics of the global data, 
the WD of each dataset $D_i$ with respect to $D_g$ is quantified by: 

    \begin{equation}
     {W_i} = \mathop {\inf }\limits_{\gamma  \in \mathcal {F}\left( {{f_i},{f_g}} \right)} \int\limits_{{\cal X} \times {\cal X}} {\left\| {x - y} \right\|d\gamma \left( {x,y} \right)},    
     \label{eq:WD}  
    \end{equation}
where $\mathcal {F}\left( {{f_i},{f_g}} \right)$ denotes the joint distribution space such that each selected $\gamma$ follows the marginal distributions $f_i$ and $f_g$. 

However, we observe that WD fails to depict the consistency between the accuracy degradation of distributed learning model and the imbalance of  labels across workers, as indicated by the gap between the curve of WD and that of FedAvg testing  in Fig. \ref{fig:non-iid-degree}. 
This is because varying the concentration parameter $\alpha$~\cite{hsu2019measuring} for generation of different heterogeneous datasets,   WD focuses more on the difference of label distributions rather than the balance of label types.  
Inspired by the work
~\cite{zawad2021curse}, we introduce a ratio between the amount of  label types at worker-$i$ and that of global dataset, to describe 
the label diversity, leading to the normalized non-i.i.d. degree as  
 \begin{flalign}
    && {\eta _i} &= \text{Normalize}\left({\beta _1}\frac{{\left| {{{\cal L}_i}} \right|}}{{\left| {{{\cal L}_g}} \right|}} + {\beta _2}W_i + \varphi\right),   & \label{eq:noniid_degree} 
\end{flalign}
where $\operatorname{Normalize}\left(\cdot\right)$ represents the Min-Max Scaling~\cite{ali2014data},  
the label ratio $\frac{{\left| {{{\cal L}_i}} \right|}}{{\left| {{{\cal L}_g}} \right|}}$ represents the label difference between individual dataset $D_i$ and global dataset $D_g$ 
, $\left| {{{\cal L}_i}} \right|,\left| {{{\cal L}_g}} \right|$ are the sizes of individual and global label spaces. 
The hyperparameters ${\beta _1},{\beta _2}, \varphi $ represent the significance of label diversity, that of statistical distribution difference, and that of fitting residual, which are determined by the dataset generation and experiment settings. 
The significantly diminished gap between the curve of our proposed non-i.i.d. degree metric $\eta$ and that of FedAvg testing reflected in Fig.~\ref{fig:non-iid-degree} illustrates its effectiveness in assessment for both data heterogeneity measurement and the non-i.i.d. impact on learning performance. 

\section{Multi-worker Selection based Distributed Swarm Learning}

This section investigates  worker selection and collaboration strategy for DSL  considering data heterogeneity. We start with  problem formulation and then develop the M-DSL algorithm. 

\subsection{Problem Formulation} \label{Problem Formulation}

Distributed learning systems allow multiple workers to collaboratively train a general model given their individual local data. 
Suppose the DSL framework with the parameter server (PS) and $C$ workers, where each worker has its local dataset ${D_i} = \left\{ {\left( {{x_{i,k}},{l_{i,k}}} \right)} \right\},{x_{i,k}} \in {\cal X},{l_{i,k}} \in {\cal L}$. In DSL, workers also have a synthetic global dataset $D_g$ for function value evaluation\cite{wang2024distributed, fan2023cb}. In each communication round, workers train $ M\left( {\mathbf{w}_{i,t},D} \right)$ locally and get their evaluation score $F_{i,t}$, where the parameter vector $\mathbf{w}_{i,t}=\{w_{i,t}^1, \ldots,w_{i,t}^n\} \in {\cal R}^N$ contains the parameters of the trained model. Based on the real time assessment on function values $\{F_{i,t}\}$, in this work, we allow the PS to select multiple workers to upload their trained parameters $\{\mathbf{w}_{i,t}\}$ to update the global model $\{\mathbf{w}_{i,t}^g\}$, which is different from the vanilla DSL where only one worker is selected in \cite{fan2023cb}. The global model $ M\left( {\mathbf{w},D} \right)$ is iteratively aggregated over the  trained model $ M_{i,t}\left( {\mathbf{w}_{i,t},D_i} \right)$ from the selected workers to approximate the mapping $f: \mathcal{X} \to \mathcal{L}$. 

\subsection{Distributed Swarm Learning}

 Different from the FL, DSL~\cite{wang2024distributed} is a recently developed distributed learning system incorporating the particle swarm optimization (PSO) algorithm with gradient descent techniques to alleviate accuracy reduction caused by data heterogeneity. DSL establishes a distributed network between the PS and devices (so-called workers) for cooperative learning. Details of the vanilla DSL algorithm can be found in~\cite{fan2023cb}.  For PSO evaluation, DSL introduces a global dataset $D_g$ to assess individual model training performance on-the-fly. The loss function is defined as the Root Mean Square Error: 
\begin{flalign}
    && {F_i}\left( {{\mathbf{w}_{i,t+1}};{D_i}} \right) &= \frac{1}{{\left| {{D_i}} \right|}}\sum\limits_{({x_{i,j}},{l_{i,j}}) \in {D_i}} \!\!\!\!\!{\sqrt {{{\left( {M({\mathbf{w}_{i,t}},{x_{i,j}}) \!- {l_{i,j}}} \right)}^2}} }. & \label{eq:RMSE}  
    \end{flalign}
For simplicity of symbols, we use $F_{i,t + 1}$ to represent ${F_i}\left( {{\mathbf{w}_{i,t+1}};{D_i}} \right)$ hereafter.


\subsection{Multi-worker Selection Mechanism}
Multi-worker selection is inspired to fully utilize the collaboration gain of network intelligence techniques by encouraging the effective and efficient 
participation from multiple workers. Based on the non-i.i.d. degree proposed in Section~\ref{sec:noniid}, the multi-worker selection mechanism is designed to train the general model with heterogeneous data, when considering both the data heterogeneity and model performance. 

The worker selection indicator  is $S_t=\{s_{1,t},\ldots, s_{C,t}\},s_i \in \{0,1\}$, in which $s_{i,t}=1$ represents the $i$-th worker selected in the $t$-th communication round or $s_{i,t}=0$ otherwise. Accordingly, the objective  is to maximize workers participation:   
\begin{flalign}
    &&   J\left( {{S_t}} \right) &= \max{\sum\limits_{i = 1}^C {{s_{i,t}}} }.   & \label{eq:objective} 
\end{flalign}

While the vanilla DSL only considers the value of loss function \eqref{eq:RMSE} to evaluate the model for single worker selection, we further introduce the non-i.i.d. degree to the loss function to better learn the underlying features from distributed datasets via effective collaborative learning among multiple workers in the presence of non-i.i.d. datasets. For the local dataset $D_i$ and the loss function $F_{i,t}$, we utilize a trade-off between individual learning performance and data quality as 
\begin{flalign}
    &&  \theta_{i,t} &=  {{\tau}F_{i.t} + {\left(1 - \tau\right)}\eta _{i}} ,&\label{eq:weighted_constrain}
\end{flalign}
where the regularizer ${\tau}$ is applied to balance the learning model performance via $F_{i.t}$ and the importance of the data quality by measuring  data non-i.i.d. degree $\eta _{i}$  defined in \eqref{eq:noniid_degree}. 

Based on this new trade-off score, we formulate the  multi-worker selection scheme as 
\begin{flalign}
    &&  \theta_{i,t}s_{i,t} &\le \overline \theta  _{t-1} ,i = 1, \ldots ,C, &\label{eq:acc_constrain}
\end{flalign}
where ${{\bar \theta }_{t - 1}} = \frac{1}{C}\sum\limits_{i \in C} {{\theta _{i,t - 1}}}$ reveals the adaptive selection process based on the averaged gain over all workers in the previous communication round. 

Then, the global model parameter updating  is expressed as
\begin{flalign}
    &&  {\mathbf{w}_{t + 1}} &= \mathbf{w}_{t} + \frac{1}{{\sum\limits_{i = 1}^C {{s_{i,t + 1}}} }}\sum\limits_{i = 1}^C {{s_{i,t + 1}}\left({\mathbf{w}_{i,t + 1}} - {\mathbf{w}_{i,t}}\right)}. &\label{eq:multi_updates}
\end{flalign}

Since only the selected workers are invited to contribute to global model updating via local model uploading, multi-worker selection costs less communication energy than all-worker uploading for model updates. 
Then, for local model updates, we design the individual model parameter updating process for M-DSL as 
\begin{flalign}
    &&  {\mathbf{w}_{i,t + 1}} \!=& \mathbf{w}_{i,t} + {c_0}{v_{i,t}} + {c_1}\left\| {\mathbf{w}_{i.t}^l - {\mathbf{w}_{i,t}}} \right\| + {c_2}\left\| {\mathbf{w}_t^{\overline{ g} } - {\mathbf{w}_{i,t}}} \right\| & \nonumber \\
    && &- \alpha \nabla F\left( {{\mathbf{w}_{i,t}}, D_g} \right), & \label{eq:M-DSL_update}  
\end{flalign}
where the local and  global best parameters are updated as
\begin{flalign}
    && \mathbf{w}_{i,t}^l &=\mathop {\arg \min }\limits_{w\in\{w_{i,t-1},w_{i,t}\}} \left\{ {F_{i,t-1},F_{i,t}} \right\} &\nonumber\\
    && & = \mathbb{I}_{\left( {F_{i,t} > F_{i,t-1}} \right)}\mathbf{w}_{i,t-1} + \mathbb{I}_{\left( {F_{i,t-1} > F_{i,t}}  \right)}\mathbf{w}_{i,t}. & \label{eq:Local_update} \\
    &&  \mathbf{w}_{i,t}^{\overline{g}} &=  \mathop {\arg \min }\limits_{\mathbf{w}\in\{\mathbf{w}_{t-1},\mathbf{w}_{t}\}} \left\{{F_{t-1},F_{t}} \right\} & \nonumber \\
    && &=\mathbb{I}_{\left( {F_{t} > F_{t-1}} \right)}\mathbf{w}_{t-1} + \mathbb{I}_{\left( {F_{t-1} > F_{t}}  \right)}\mathbf{w}_{t} & \label{eq:Global_update}   
\end{flalign}
where the event indicator is designed as $\mathbb{I}_{F_{t} > F_{t-1}}$, whose value is equal to $1$ when $F_{t} > F_{t-1}$, or $0$ otherwise. 

\subsection{Algorithm Implementation}
The implementation of our M-DSL is described in Algorithm.~\ref{alg:MDSL}.  For initialization, all the workers are invited in the first  round, with randomized $\mathbf{w}_{i,t}^l,\mathbf{w}_{i,t}^{\overline{g}}$. 
Then, in each round, each worker trains local model based on the updated $\mathbf{w}_{i,t + 1}^l, \mathbf{w}_{i,t + 1}^{\overline{g}}$. Next, M-DSL selects workers per \eqref{eq:acc_constrain} to upload their models. The contributions of selected workers are aggregated for global model updates, until model converges.

\begin{algorithm}[!htb]
	\caption{M-DSL}
	\label{alg:MDSL}
	\begin{algorithmic}[1]
\renewcommand{\algorithmicrequire}{\textbf{Initialization:}}
		\REQUIRE ~~\\
		Initialize $\mathbf{w}, \mathbf{w}_{i,t}^l,\mathbf{w}_{i,t}^{\overline{g}}$, $t,i$, given datasets $\left\{ {{D_i}} \right\}, D_g$;
\\
        Calculate the non-i.i.d. degree $\{\eta_i\}$;
    \FOR {each communication round $t=1:T$}

        \STATE \!\!\!\textbf{ at each local worker:$i \in C$}
            \STATE \hspace{0.1in} updates $\mathbf{w}_{i,t}^l,\mathbf{w}_{i,t}^{\overline{g}}$ via \eqref{eq:Local_update} and \eqref{eq:Global_update}; 
            \STATE  \hspace{0.1in} update $w_{i,t + 1}$ via \eqref{eq:M-DSL_update}, calculate $F_{i,t + 1}$ via \eqref{eq:RMSE}; 
            \STATE   \hspace{0.1in} search for better workers via \eqref{eq:objective} and \eqref{eq:acc_constrain}; 
            \STATE   \hspace{0.1in} upload $\left\{ {{w_{i,t + 1}},{\theta_{i,t + 1}}} \right\},i \in {S_{{{t + 1}}}}$ to PS;

        \STATE \!\!\!\textbf{ at the PS:} 
        \STATE \hspace{0.1in} update global model via \eqref{eq:multi_updates};
        \STATE \hspace{0.1in} {broadcast $w_{t + 1}$ to all workers.}
    \ENDFOR
	\end{algorithmic}
\end{algorithm}


\section{Performance Analysis}
In this section, we analyze   convergence behavior and communication efficiency of M-DSL.
\subsection{Assumption}
Following a similar approach in \cite{fan2021joint}, we adopt the common assumptions as follows.
\begin{assumption}\label{ass1}
($\alpha$-order Lipschitz continuity, smoothness): 
{\color{black}In $t$-th communication round, the gradient $\nabla F_i(\mathbf{w}_{i,t})$ of the loss function $F_i(\mathbf{w})$ at worker $i$} is $\alpha$-order uniformly Lipschitz smooth with respect to $w$, that is,
\begin{eqnarray}\label{eq:Lipschitz}
\|\nabla F_i(\mathbf{w}_{i,t+1})-\nabla F_i(\mathbf{w}_{i,t})\|_{\alpha}\leq L\|\mathbf{w}_{i,t+1}-\mathbf{w}_{i,t}\|_{\alpha}, \;\;\;  
\end{eqnarray}
where $L$ is a positive constant, referred as the Lipschitz constant for $F(\cdot)$~\cite{fan2023cb,fan2021joint,xu2024qc}.
\end{assumption}

\begin{assumption}\label{ass2}
(Bounded local gradients): 
\color{black} For the DSL framework, the gap between the global model $w_t$ and the local model $w_{i,t},\forall i,t$ is bounded by ~\cite{fan2021joint, fan2023cb, fan20221}
\begin{eqnarray}\label{eq:bound_local_gradients}
{{{w_{i,t}}-{w_{t - 1}}}\le \max\limits_{i\in\{1, \ldots,C\}}  \left\{ {{\alpha \nabla F_{i.t}+ {\left( {{c_0} - {c_1} - {c_2}} \right){v_{i,t}}} }} \right\}}.
\end{eqnarray}

\end{assumption}

\subsection{Convergence Bound}
With the \textbf{Assumption~\ref{ass1}} and \textbf{Assumption~\ref{ass2}}, the convergence errors of M-DSL are bounded by \textbf{Theorem~\ref{theorem1}}

\begin{theorem}\label{theorem1}
The individual parameter updates function is given by \eqref{eq:M-DSL_update}. For the global optimal model ${w^{\star}}$ and communication round $t=1,\ldots,T$, given the learning rate $\alpha=\frac{1}{L}$, the expected convergence rate is bounded by
\begin{align}\label{eq:theorem1}
\mathbb{E}\left[\sum_{t=1}^{T}\frac{\|\nabla F_i(\mathbf{w}_{i,t})\|^2}{T}\right]&\le \frac{F\left(\mathbf{w}_{i,0}\right)-F\left(\mathbf{w}^{\star}\right)}{\mathbb{E}\left[{\overline{\Phi}}\right] T },\ \forall i, 
\end{align}
where $\overline{\Phi}= k_1\overline{u}_i\overline{q}_i+Lk_1\overline{u}_i^2+\frac{\left(1+c_2\right)^2}{2L} + k_2\overline{u}\overline{q}+Lk_2^2\overline{u}^2$ represents the upper bound indicating all the workers are selected. The coefficient of individual velocity is $k_1=c_0 - \mathbb{I}_{\left( {F_{i,t} > F_{i,t-1}} \right)}c_1 - c_2\left(c_0-c_1-c_2\right)$. The coefficient of global velocity is $k_2=c_2\mathbb{I}_{\left( {F_{t-1} > F_{t}}  \right)}$.
\end{theorem}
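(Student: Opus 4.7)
The plan is to follow the standard non-convex SGD-style convergence argument, adapted to the particle-swarm-augmented M-DSL update rule. I would start from the $\alpha$-order Lipschitz smoothness inequality of \textbf{Assumption~\ref{ass1}},
\begin{equation*}
F_i(\mathbf{w}_{i,t+1}) \le F_i(\mathbf{w}_{i,t}) + \langle \nabla F_i(\mathbf{w}_{i,t}), \Delta \mathbf{w}_{i,t}\rangle + \tfrac{L}{2}\|\Delta\mathbf{w}_{i,t}\|^2,
\end{equation*}
and substitute the local update rule \eqref{eq:M-DSL_update} for $\Delta\mathbf{w}_{i,t} = \mathbf{w}_{i,t+1}-\mathbf{w}_{i,t}$. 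This gives a descent-type inequality whose dominant contribution is $-\alpha\|\nabla F_i(\mathbf{w}_{i,t})\|^2$ and whose remainder collects three cross terms (from $c_0 v_{i,t}$, $c_1(\mathbf{w}_{i,t}^l-\mathbf{w}_{i,t})$, and $c_2(\mathbf{w}_t^{\overline g}-\mathbf{w}_{i,t})$) together with the quadratic $\tfrac{L}{2}\|\Delta\mathbf{w}_{i,t}\|^2$.

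The next step is to repackage the cross terms. Because $\mathbf{w}_{i,t}^l$ and $\mathbf{w}_t^{\overline g}$ are chosen through the indicators in \eqref{eq:Local_update}--\eqref{eq:Global_update}, each PSO correction can be rewritten as an indicator-weighted displacement between the current iterate and the previous one. Invoking \textbf{Assumption~\ref{ass2}} allows me to substitute the bound $\mathbf{w}_{i,t}-\mathbf{w}_{t-1}\le \alpha\nabla F_{i,t}+(c_0-c_1-c_2)v_{i,t}$ into these displacements, exposing the individual-velocity coefficient $k_1 = c_0 - \mathbb{I}_{(F_{i,t}>F_{i,t-1})}c_1 - c_2(c_0-c_1-c_2)$ and the global-velocity coefficient $k_2 = c_2\mathbb{I}_{(F_{t-1}>F_t)}$ stated in the theorem. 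Introducing the shorthand bounds $\overline u_i, \overline u$ for velocity magnitudes and $\overline q_i, \overline q$ for the gradient magnitudes and applying Cauchy--Schwarz to each inner product then produces per-round descent of the form $\Phi_t \|\nabla F_i(\mathbf{w}_{i,t})\|^2 \le F(\mathbf{w}_{i,t}) - F(\mathbf{w}_{i,t+1})$, where $\Phi_t$ collects the four building blocks $k_1\overline u_i\overline q_i + Lk_1\overline u_i^2 + k_2\overline u\overline q + Lk_2^2\overline u^2$ plus the residual $\tfrac{(1+c_2)^2}{2L}$.

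Choosing $\alpha = 1/L$ makes the quadratic gradient piece inside $\tfrac{L}{2}\|\Delta\mathbf{w}\|^2$ cancel half of the leading $-\alpha\|\nabla F_i\|^2$ term and simultaneously gives the neat $(1+c_2)^2/(2L)$ residue produced by pairing $\nabla F_i$ with the $c_2$ global-best correction under Cauchy--Schwarz. From here I would telescope from $t=1$ to $T$, use $F(\mathbf{w}_{i,T+1})\ge F(\mathbf{w}^\star)$ to lower-bound the right-hand side, divide through by $T$, and take the expectation over the random selection indicators. Upper-bounding the per-round coefficient by its largest value $\overline\Phi$, which is attained precisely when every worker is selected (so that the positive $k_1$- and $k_2$-weighted terms achieve their maximum), yields the stated bound \eqref{eq:theorem1}.

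The hard part will be the bookkeeping of the PSO cross terms. Unlike vanilla SGD, $\mathbf{w}_{i,t}^l$ and $\mathbf{w}_t^{\overline g}$ are neither independent of $\mathbf{w}_{i,t}$ nor mean-zero; they carry indicator variables whose signs determine whether $c_1$ and $c_2$ enter $k_1$ and $k_2$ with a $+$ or $-$. Isolating the coefficient of $\|\nabla F_i(\mathbf{w}_{i,t})\|^2$ cleanly therefore requires invoking \textbf{Assumption~\ref{ass2}} at exactly the right place so that residual displacements are replaced by $\overline u, \overline q$ rather than by iterate-dependent quantities, and the $(1+c_2)^2/(2L)$ residue must be kept separate from the $\|\nabla F_i\|^2$ coefficient, otherwise the descent structure that drives the telescoping argument collapses.
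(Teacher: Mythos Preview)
Your proposal is essentially the paper's own argument: apply the Lipschitz-smoothness descent inequality, substitute the M-DSL velocity $\mathbf{v}_{i,t+1}=\mathbf{w}_{i,t+1}-\mathbf{w}_{i,t}$, rewrite the PSO corrections via the indicators in \eqref{eq:Local_update}--\eqref{eq:Global_update} together with \textbf{Assumption~\ref{ass2}} to expose $k_1,k_2$, bound the cross terms, set $\alpha=1/L$, and telescope. The only minor slip is terminological: in the paper $\overline q_i,\overline q$ are bounds on the \emph{cosine similarity} between $\mathbf{v}_{i,t}$ (resp.\ $\mathbf{v}_t$) and $-\nabla F_{i,t}$, and $\overline u_i,\overline u$ bound the ratio $\|\mathbf{v}\|/\|\nabla F\|$, not raw velocity or gradient magnitudes---but you deploy them in exactly the same role, so the mechanics are unchanged.
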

\begin{proof}
A sketch of proof is provided here for the page limit.

First, the inequality of the loss function and its gradient can be derived according to the \textbf{Assumption~\ref{ass1}}.
\begin{align}
F_{i,t+1} \le & F_{i,t} + {\mathbf{v}_{i,t+1}^{\rm{T}}}\nabla F_{i,t}+ \frac{L}{2}{\left\| \mathbf{v}_{i,t+1} \right\|^2}.\label{eq:smooth}
\end{align}
The velocity update of individual model $\mathbf{v}_{i,t+1}=\mathbf{w}_{i,t+1}-\mathbf{w}_{i,t}$ can be expressed by substituting \eqref{eq:M-DSL_update}, \eqref{eq:Global_update} and \eqref{eq:Local_update}:
\begin{align}\label{eq:velocity_1}
&{\mathbf{v}_{i,t + 1}} = c_0\mathbf{v}_{i,t} - \alpha \nabla F_{i,t} \nonumber\\
&+ c_2\left(\mathbb{I}_{\left( {F_{t} > F_{t-1}} \right)}\mathbf{w}_{t-1} + \mathbb{I}_{\left( {F_{t-1} > F_{t}}  \right)}\mathbf{w}_{t} - \mathbf{w}_{i,t}\right) 
\nonumber\\
& + c_1\left(\mathbb{I}_{\left( {F_{i,t} > F_{i,t-1}} \right)}\mathbf{w}_{i,t-1} + \mathbb{I}_{\left( {F_{i,t-1} > F_{i,t}}  \right)}\mathbf{w}_{i,t} - \mathbf{w}_{i,t}\right)  \nonumber\\
&\le \left(c_0 - \mathbb{I}_{\left( {F_{i,t} > F_{i,t-1}} \right)}c_1 - c_2\left(c_0-c_1-c_2\right) \right)\mathbf{v}_{i,t}\nonumber\\
&-\left(1+c_2\right)\alpha \nabla F_{i,t} + c_2\mathbb{I}_{\left( {F_{t-1} > F_{t}}  \right)}\mathbf{v}_t\nonumber\\
&=k_1\mathbf{v}_{i,t}+k_2\mathbf{v}_{t}-\left(1+c_2\right)\alpha \nabla F_{i,t},
\end{align}
where $\mathbf{v}_t=\mathbf{w}_t-\mathbf{w}_{t-1}=\frac{1}{{\sum\limits_{j = 1}^C {{s_{j,t}}} }}\sum\limits_{j = 1}^C {{s_{j,t}}{\mathbf{v}_{j,t}}}$ represents the velocity update of global model, which is related to the multi-worker selection strategy. As 
in \cite{fan2023cb}, we define the bounded cosine-similarity of $\mathbf{v}_{i,t}$ and $ -\nabla F_{i,t}$, $v_{t}$ and $ -\nabla F_{i,t}$ as
\begin{align}
&\underline{q}_i\le \frac{{\color{black}\langle\mathbf{v}_{i,t}, -\nabla F_{i,t}\rangle}}{\|\mathbf{v}_{i,t}\|\|\nabla F_{i,t}\|} \le \overline{q}_i, \ \forall i, t,\label{eq:bound_1} \\
&\underline{q}\le \frac{{\color{black}\langle\mathbf{v}_{t}, -\nabla F_{i,t}\rangle}}{\|\mathbf{v}_{t}\|\|\nabla F_{i,t}\|}\le \overline{q}, \ \forall i, t,\label{eq:bound_2} \\
&\underline{u}_i \le \frac{\|\mathbf{v}_{i,t}\|}{\|\nabla F_{i,t}\|}\leq\overline{u}_i, 
\underline{u} \le \frac{\|\mathbf{v}_{i,t}\|}{\|\nabla F_{i,t}\|}\leq\overline{u}_i, \ \forall i, t.\label{eq:bound_3}
\end{align}
Then substituting \eqref{eq:velocity_1} and the aforementioned bounds to scale the \eqref{eq:smooth}, given $\alpha=\frac{1}{L}$, we have
\begin{align}
F_{i,t+1} -F_{i,t} \le -\Phi_{i,t} \left\|\nabla F_{i,t}\right\|^2, \label{eq:scaling_1}
\end{align}
where the $\Phi_{i,t}=\Phi+\Phi_{i,t}^s$, in which $\Phi = k_1\overline{u}_i\overline{q}_i+Lk_1\overline{u}_i^2+\frac{\left(1+c_2\right)^2}{2L}$ is time-invariant. $\Phi_{i,t}^s=\mathbb{I}_{\left( s_{i,t}=1\right)}\left(k_2\overline{u}\overline{q}+Lk_2^2\overline{u}^2\right)$ represents the impact of the worker selection strategy on local parameter updating. Noting that if the $i$-th worker is not selected, $\mathbf{v}_t$ is constant in the updating of $v_{i,t+1}$. Hence, $\mathbb{I}_{\left( {s_{i,t}=0} \right)}\mathbf{v}_t$ does not contribute to the update of individual velocity $\mathbf{v}_{i,t+1}$.

Finally we extend a sum of expected convergence error with the ideal upper bound $F\left(\mathbf{w}_{i,0}\right)-F\left(\mathbf{w}^{\star}\right)$ over $T$ iterations as
\begin{align}
F\left(\mathbf{w}_{i,0}\right)-F\left(\mathbf{w}^{\star}\right) &\ge \mathbb{E}\left[\sum\limits_{t = 0}^T{F_{i,ti1} -F_{i,t}}\right]\nonumber\\
& \ge \mathbb{E}\left[\overline{\Phi}\sum\limits_{t = 0}^T{ \left\|\nabla F_{i,t}\right\|^2}\right].\label{eq:scaling_2}
\end{align}

The convergence rate is limited by 
\begin{align}
&  \mathbb{E}\left[\frac{1}{T}\sum\limits_{t = 0}^T{ \left\|\nabla F_{i,t}\right\|^2}\right]\le \frac{F\left(\mathbf{w}_{i,0}\right)-F\left(\mathbf{w}^{\star}\right)}{\mathbb{E}\left[{\overline{\Phi}}\right] T }. \label{eq:scaling_3}
\end{align}
Here, the proof is completed.
\end{proof}
In this way, our proposed M-DSL algorithm has a similar convergence rate, a worst-case $ \mathcal{O}\left(\frac{1}{T} \right)$, as standard DSL~\cite{fan2023cb}. 

\subsection{Communication Efficiency}
As a biased worker selection based framework, partial model updates achieve comparable performance about vanilla FL with less message transmission in each communication round. Specifically, given $C$-participants distributed learning systems for training a model with $n$ length parameter vector, the communication overhead of FedAvg is $nC$ for uploading local updates. M-DSL accomplishes less information uploads as $n\sum\limits_{i = 1}^C {{s_{i,t}}}$. In the next section, experimental results show that a small subset of workers can represent the contributions of all participants after the early training stage, reducing the communication overhead of M-DSL in model updates. Moreover, M-DSL converges in fewer rounds than FedAvg, indicating its deployment requires less wireless resource.

\section{Simulation Results}
In this section, we provide simulation results to evaluate the performance of M-DSL in edge IoT networks. Compared with the benchmark DSL~\cite{fan2023cb} and FedAvg~\cite{mcmahan2016federated}, M-DSL achieves faster convergence and higher validation accuracy.

\begin{figure}[tb]
  \centering
  \includegraphics[scale=0.26]{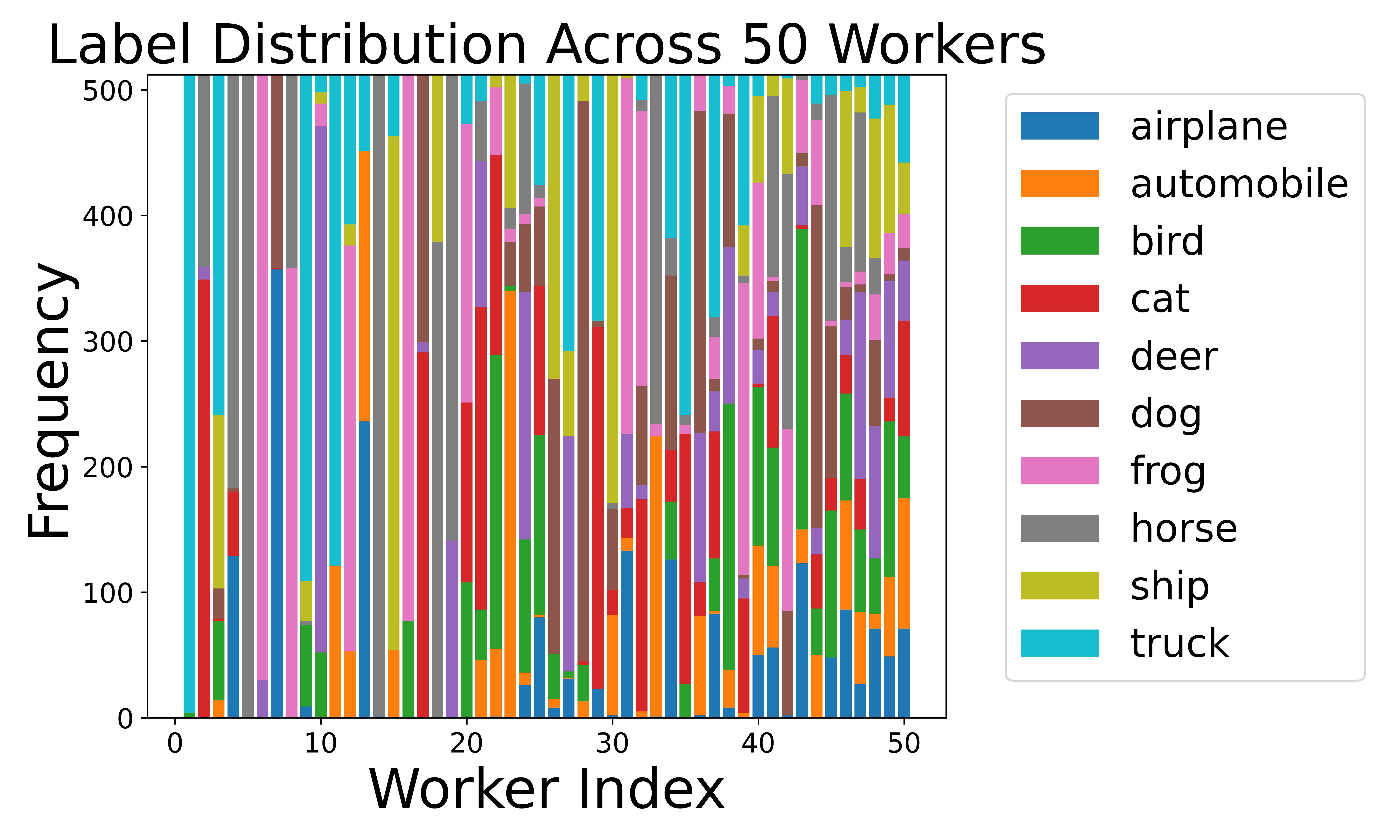}
    \caption{The non-i.i.d. CIFAR10 case II: A more common heterogeneous example in edge IoTs, the network contains 20 datasets sampled by $\alpha =0.1$, 15 datasets by $\alpha=0.5$, 10 datasets by $\alpha=1$, 5 datasets by $\alpha=10$.}\label{datasets}
\end{figure}

\subsection{Experimental Settings}
We test our M-DSL in an edge IoT network with $C=50$ local workers, where each worker contains the individual dataset ${\left| {{D_i}} \right|}=512$ and a synthetic evaluation dataset ${\left| {{D_g}} \right|}=2048$. Local datasets are generated via the time-invariant subset sampling from public datasets MNIST ~\cite{deng2012mnist} and CIFAR10 ~\cite{krizhevsky2009learning}. The data sampling process is controlled by the Dirichlet distribution with parameter $\alpha$. $D_g$ is generated by GANs. The neural network model of M-DSL is a five-layer Convolutional Neural Network~\cite{fan2023cb} and the ResNet18~\cite{casella2024experimenting}, respectively. 
For the hyperparameter settings, we apply the SGD optimizer with attenuated learning rate $\alpha_{init}=0.01,\gamma=0.5$. The training process involves $40$ communication rounds on CIFAR10 and $20$ round on MNIST, each round includes $4$ epochs. The batch size is $64$ for stable training. 
The hyperparameters used in PSO are sampled from $c_0 \sim U\left( {0,1} \right)$ and $c_1,c_2\sim \mathcal{N}\left( {0,1} \right)$. The model aggregation  is conducted via averaging the selected models with the weight regularizer $\tau=0.9$ in \eqref{eq:weighted_constrain}.

\begin{figure*}[!t]
    \centering
    \subfloat[I.i.d. case on MNIST]{\includegraphics[width=1.8in]{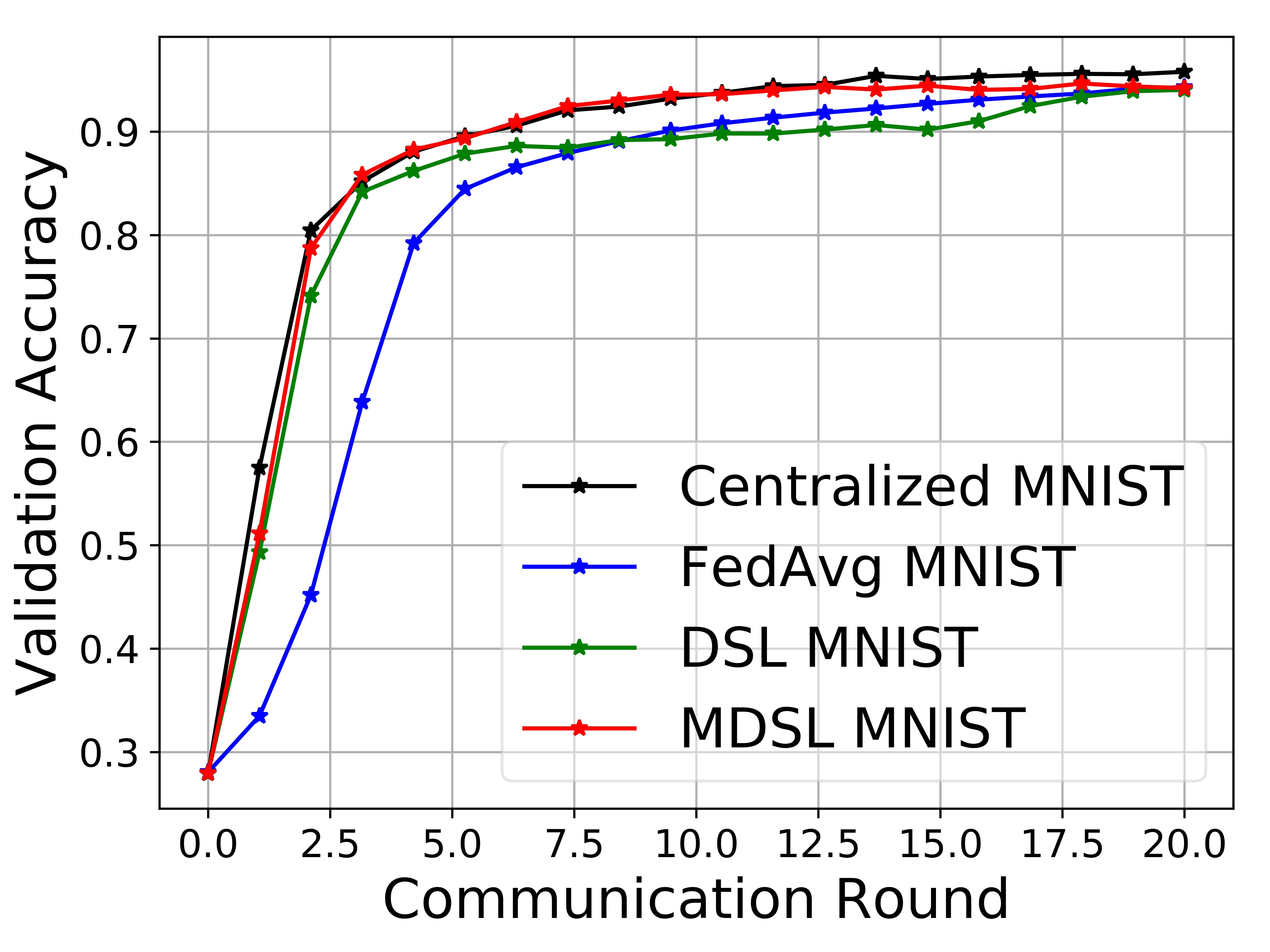}}%
    \label{results:figure1}
    \hfil
    \subfloat[Non.i.i.d. Case I on MNIST]{\includegraphics[width=1.8in]{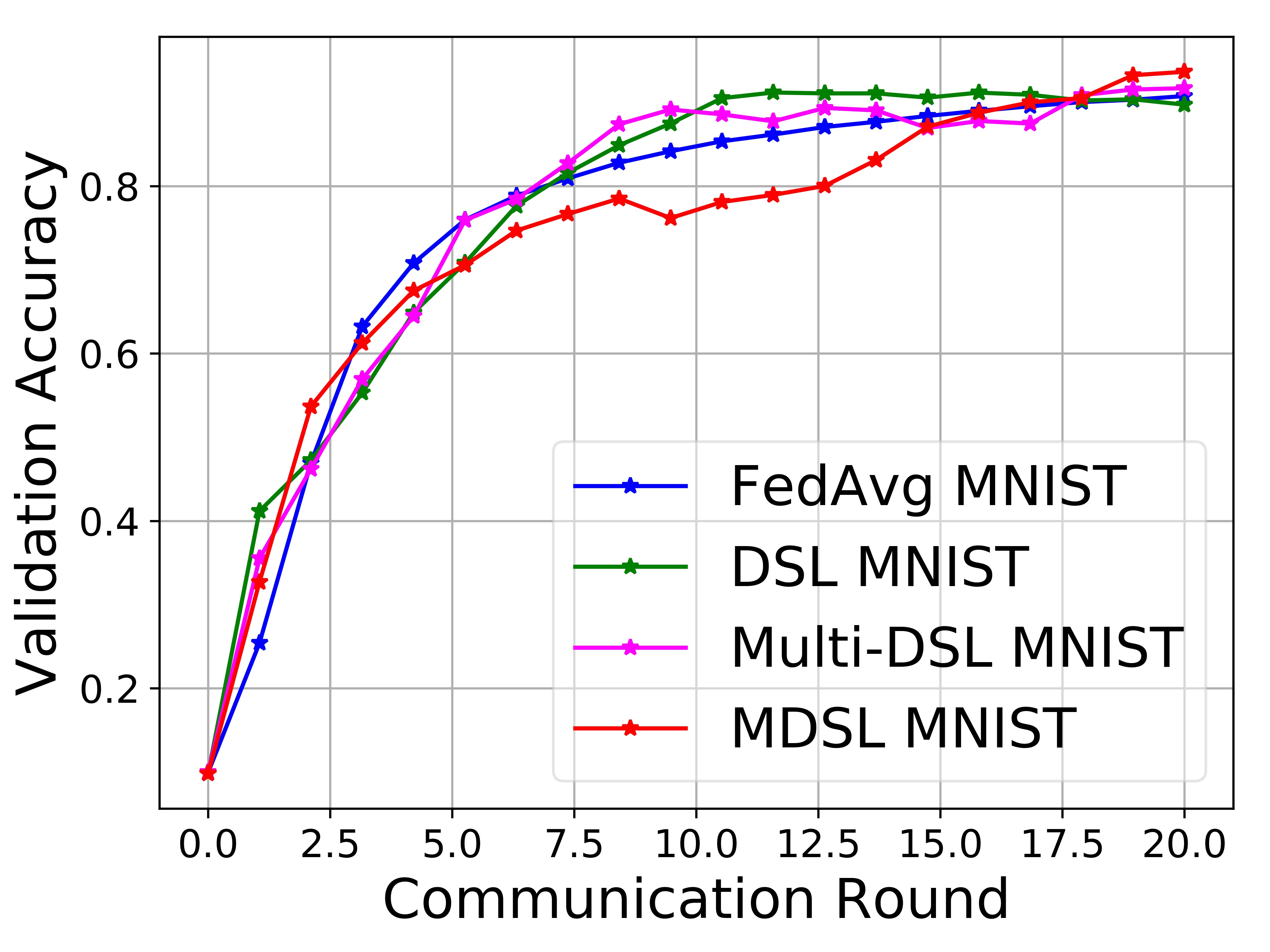}}%
    \label{results:figure2}
    \hfil
    \subfloat[Non.i.i.d. Case II on MNIST]{\includegraphics[width=1.8in]{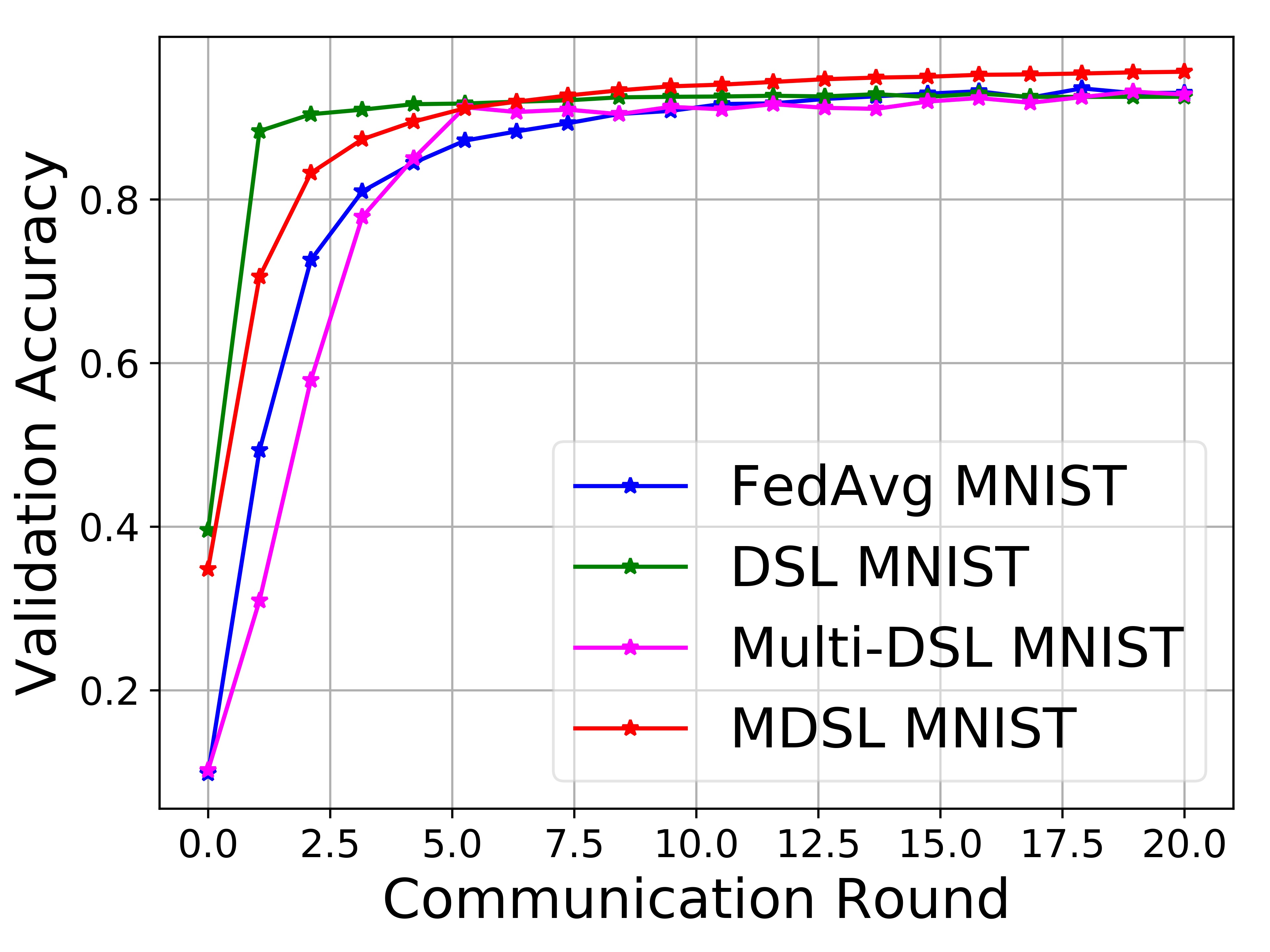}}%
    \label{results:figure3}\\
    \subfloat[I.i.d. Case on CIFAR10]{\includegraphics[width=1.8in]{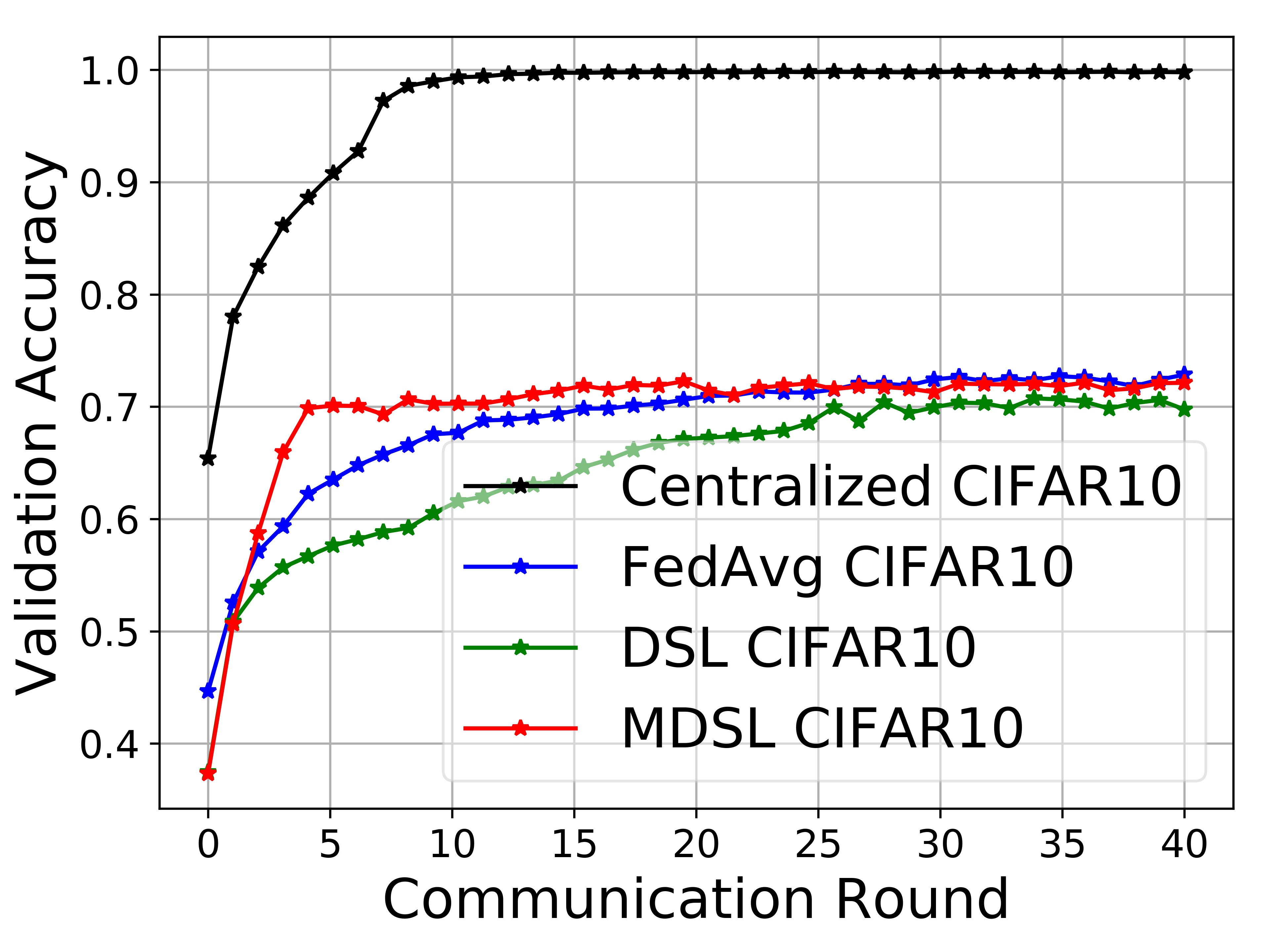}}%
    \label{results:figure4}
    \hfil
    \subfloat[Non.i.i.d. Case I on CIFAR10]{\includegraphics[width=1.8in]{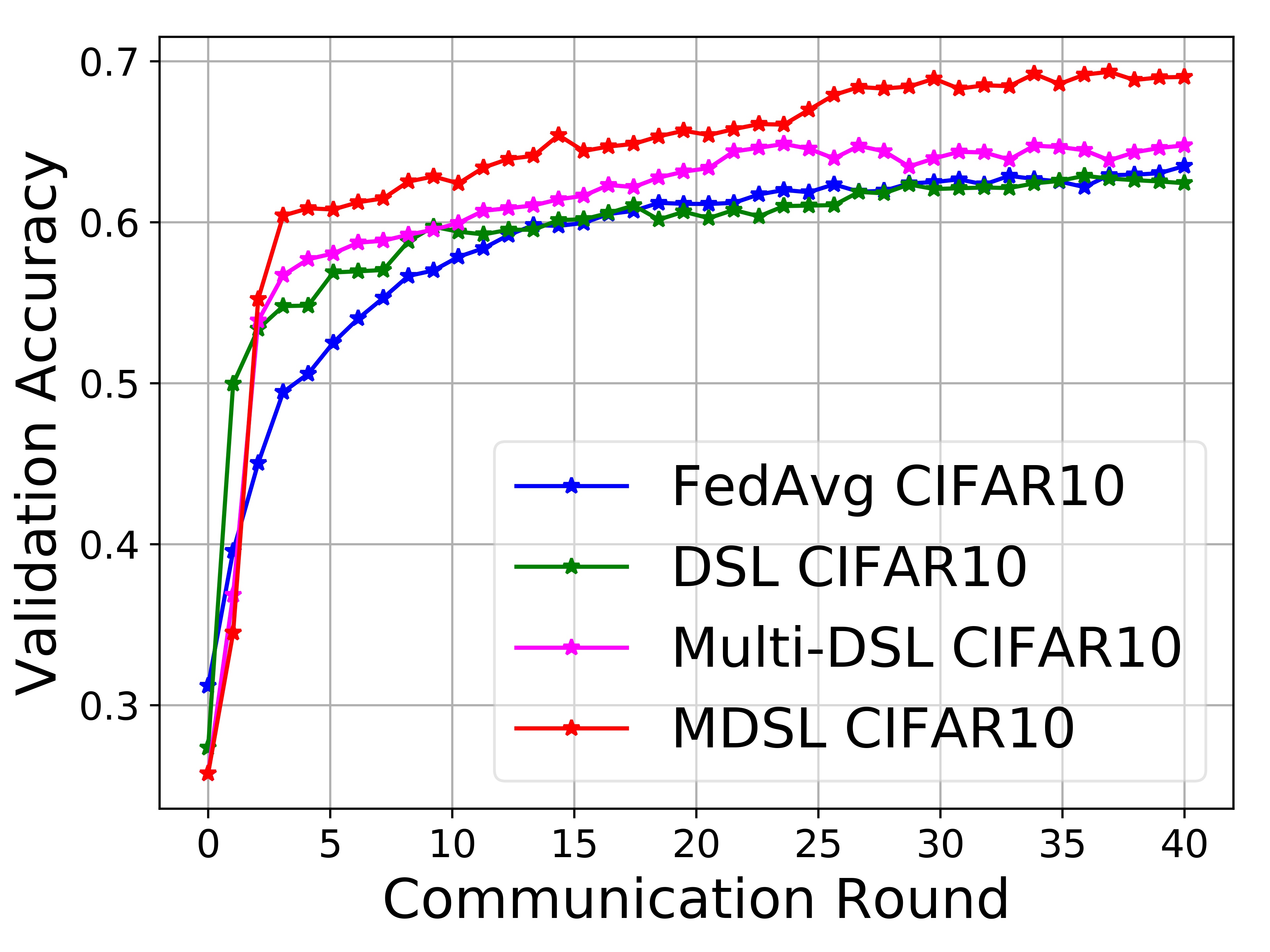}}%
    \label{results:figure5}
    \hfil
    \subfloat[Non.i.i.d. Case II on CIFAR10]{\includegraphics[width=1.8in]{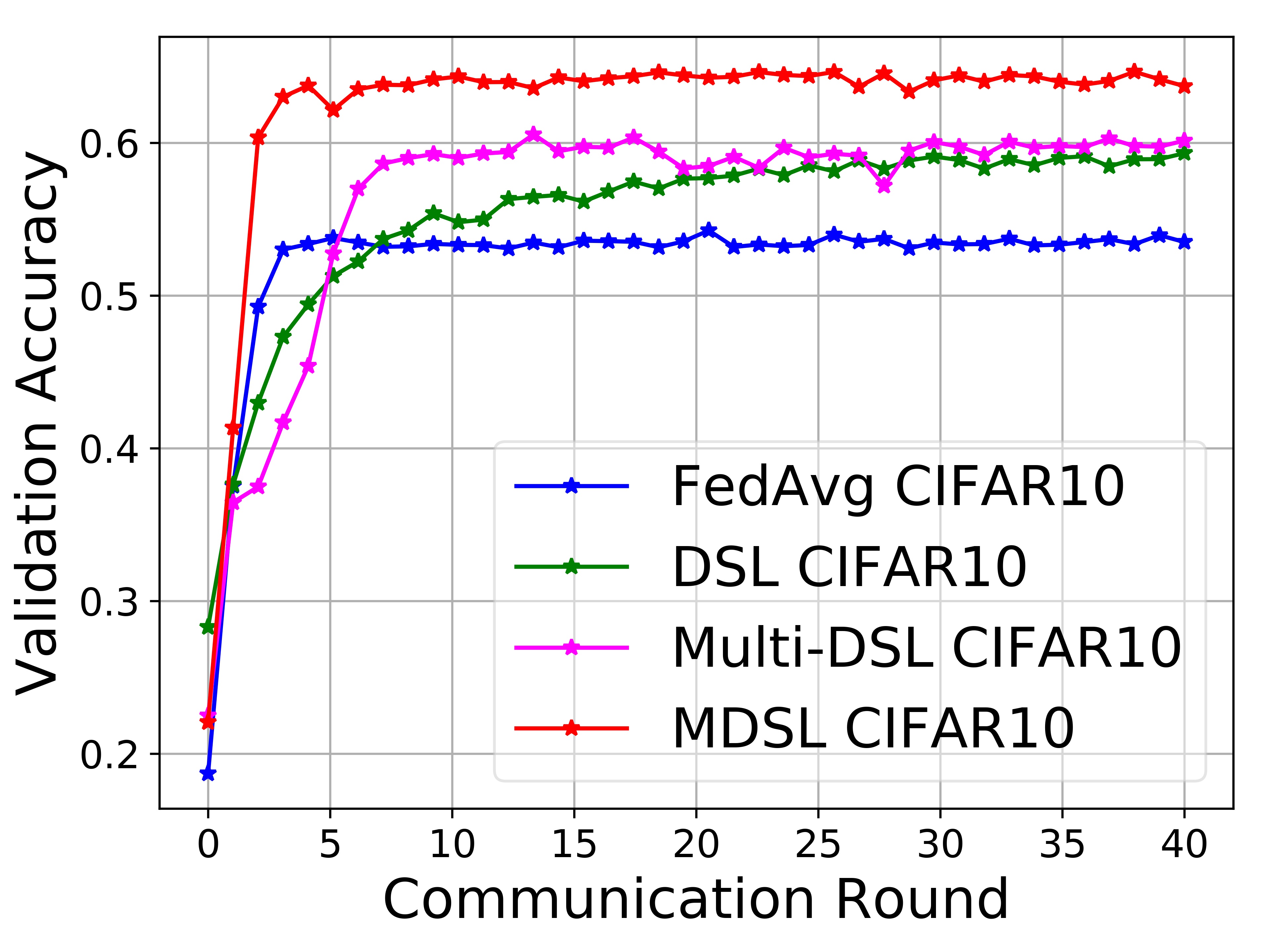}}%
    \label{results:figure6}
    \vspace{-0.03in}
    \caption{Learning performance evaluation of image classification by FedAvg, DSL and improved DSL. 
}
    \label{results}
    \vspace{-0.15in}
\end{figure*}
    

\subsection{Performance Comparison}
To evaluate different distributed learning methods, we design experiments to accomplish classification tasks, the multi-worker selection DSL without non-i.i.d. degree (Multi-DSL), and the multi-worker selection DSL (M-DSL) for different data cases, including i.i.d. case: 50 i.i.d datasets, non-i.i.d. case I: 50 non-i.i.d. datasets sampled by Dirichlet Distribution $\alpha=0.5$, and 
more diverse non-i.i.d. case II: 20 sets sampled by $\alpha=0.1$, 15 sets by $\alpha=0.5$, 10 sets by $\alpha=1$ and 5 sets by $\alpha=10$. The non-i.i.d. case II provides a more practical data heterogeneous scenario with diverse datasets as visualized in Fig. \ref{datasets}. Classification accuracy of M-DSL is compared with baseline approaches DSL and FedAvg in Fig. \ref{results}. The i.i.d. case plays as a baseline performance that M-DSL can reach in the ideal case. 
Experiments in non-i.i.d. cases I and II highlight the advancement of M-DSL to achieve more accurate accuracy than other benchmarks 
on heterogeneous datasets. Moreover, M-DSL achieves more efficient convergence with higher validation accuracy in the non-i.i.d. cases. The introduction of multi-worker selection obviously enhances the convergence rate and classification accuracy by the full utilization of distributed data. The non-i.i.d. degree metric introduced in this work further strengthens the enhancement of our M-DSL in  heterogeneous scenarios.

\subsection{Hyperparameter Design}\label{subsec:hyper}
This section discusses the hyperparameter design of the non-i.i.d. degree in \eqref{eq:noniid_degree}. For the optimum design of $\beta_1,\beta_2, \varphi $, we observe the distribution difference $W$, label ratio ${{\left| {{{\cal L}_i}} \right|}}/{{\left| {{{\cal L}_g}} \right|}}$ and FedAvg accuracy $acc$ on CIFAR10 with the non-i.i.d setting $\alpha = {0.001, \ldots, 1000}$, and attempt to fit the trend of $\eta$ with $acc$. Applying the least squares based fitting of the observations, we use $90\%$ records to fit the linear function and $10\%$ to test. The regression result shows the remarkable linearity with $R^2=0.97$ in MNIST and $R^2=0.895$ in CIFAR10. The results of linear regression guide a general setting of $\beta_1=0.286,\beta_2=-0.07,\varphi =0.592$ in CIFAR10 and $\beta_1=-0.031,\beta_2=0.127,\varphi =-0.04$ in MNIST, in which the non-i.i.d. degree can take place of the generation parameter $\alpha$. Thus, the proposed non-i.i.d degree not only plays as an effective metric to describe the  heterogeneity level of distributed data in edge IoT devices, but also contributes to guide the multi-worker selection in DSL. 


\section{Conclusion}
This work proposes a M-DSL algorithm to tackle the data heterogeneity challenge in edge IoT networks. 
We define a non-i.i.d. degree metric superior in measuring data heterogeneity of distributed datasets and accuracy degradation due to the non-i.i.d. impact on distributed learning. 
Further, we  design  a multi-worker selection strategy that enhances  network intelligence under the constraint of limited network resources. In addition, we analyze the convergence behavior of the proposed M-DSL, supported by comprehensive simulations, which indicate the advantages of M-DSL in terms of faster convergence, higher accuracy, and less communication cost than FedAvg and standard DSL with heterogeneous data. 

\bibliographystyle{IEEEtran}
\bibliography{bib}

\end{document}